\documentclass{article}

\usepackage[preprint]{acl}

\usepackage{times}
\usepackage{latexsym}
\usepackage[T1]{fontenc}
\usepackage[utf8]{inputenc}
\usepackage{microtype}

\usepackage{graphicx}
\graphicspath{{images/}}
\usepackage{amsmath,amssymb,amsfonts}
\usepackage{amsthm}
\usepackage{booktabs}
\usepackage{multirow}
\usepackage{algorithm}
\usepackage{algorithmic}
\usepackage{xcolor}
\usepackage{enumitem}
\usepackage{placeins}
\usepackage{subcaption}
\usepackage{url}
\usepackage{cleveref}

\newcommand{\ours}{\textsc{Purge}}
\newcommand{\klr}{\texttt{kl\_retain}}
\newcommand{\eg}{\textit{e.g.}}

\newcommand{\KL}{\mathrm{KL}}
\newcommand{\CE}{\mathrm{CE}}

\newcommand{\Df}{\mathcal{D}_f}
\newcommand{\Dr}{\mathcal{D}_r}
\newcommand{\gf}{g_{\mathrm{f}}}
\newcommand{\gr}{g_{\mathrm{r}}}
\newcommand{\gproj}{\tilde{g}_{\mathrm{f}}}
\newcommand{\lossf}{\mathcal{L}_{\mathrm{f}}}
\newcommand{\lossr}{\mathcal{L}_{\mathrm{r}}}

\newcommand{\losskd}{\mathcal{L}_{\mathrm{KD}}}

\newtheorem{theorem}{Theorem}
\newtheorem{proposition}{Proposition}

\newtheorem{definition}{Definition}
\newtheorem{remark}{Remark}

\title{PURGE: Projected Unlearning via Retain-Guided Erasure}

\author{
  \textbf{Vedant Jawandhia\textsuperscript{1}},
  \textbf{Daksh Ahuja\textsuperscript{1}} \\
  \textbf{Ghufran Alam Siddiqui\textsuperscript{1}},
  \textbf{Prashant Trivedi\textsuperscript{1}},
  \textbf{Yash Sinha\textsuperscript{1}},
  \textbf{Pratik Narang\textsuperscript{1}} \\
  \textsuperscript{1}BITS Pilani, Pilani Campus, India
}

\begin{document}
\maketitle

\begin{abstract}
We propose \ours{}, a machine unlearning algorithm built on a simple but an under-exploited observation: continual learning (CL) and machine unlearning (MU) which are fundamentally dual problems. CL tries to learn new tasks without forgetting old ones; MU tries to erase specific data without hurting retained performance representing the same underlying tension in opposite directions.
\ours{} leverages this duality by adapting \emph{gradient projection} from A-GEM~\cite{chaudhry2019agem} so that every unlearning step is constrained to not increase the retain-set loss.
On top of this, it performs \emph{multi-layer representation erasure}, pushing forget-set activations in intermediate layers towards the retain distribution to remove information from hidden representations rather than just suppressing it at the output.
A key design choice is the \emph{retain-confusion target}: rather than pushing forget outputs toward the uniform distribution, which we found to be surprisingly easy for membership inference attacks to detect, we instead target the model's natural confusion pattern on retain data.
This makes the unlearned model hard to distinguish from one retrained from scratch.
Two self-regulating stopping criteria (a retain-loss budget and a forget-accuracy target) let the algorithm decide on its own when to stop, removing the need for manual epoch tuning.
In experiments on five datasets (CIFAR-10, MNIST, SVHN, STL10, PathMNIST) across 22 class-level forgetting tasks, \ours{} consistently keeps retain accuracy above 96\% while achieving MIA AUROC close to 0.5 (the ideal), outperforming gradient ascent, KL-uniform, and several published baselines on the privacy--utility frontier.
\end{abstract}

\section{Introduction}\label{sec:intro}

The right to be forgotten, enshrined in regulations such as the EU General Data Protection Regulation (GDPR)~\cite{voigt2017gdpr}, requires that machine learning systems be capable of removing the influence of specific training data upon request.
Na\"ive retraining from scratch is the gold-standard solution, but it is computationally prohibitive for large models.
\emph{Machine unlearning} (MU)~\cite{bourtoule2021sisa} aims to approximate the effect of retraining without incurring its full cost.

Existing MU methods face a fundamental tension: forgetting aggressively enough to satisfy privacy audits while preserving the model's utility on retained data.

In our evaluation of representative baseline methods spanning gradient-based, saliency-based, distillation-based, and Fisher-based approaches, each method exhibits a characteristic failure mode.
Gradient ascent~\cite{thudi2022unrolling} maximises the loss on forget data but often degrades retain-set accuracy.
Saliency-based methods such as SalUn~\cite{fan2024salun} selectively perturb high-importance weights but do not provide a formal retain-performance guarantee.
Knowledge distillation approaches~\cite{chundawat2023badteacher} anchor the retain distribution but do not constrain the gradient direction.
Fisher-based methods~\cite{golatkar2021scrubbing} are theoretically grounded but require expensive Hessian approximations that are difficult to scale.

The central observation behind our work is that \textbf{machine unlearning is the dual of continual learning (CL)}: CL protects old-task knowledge while learning new tasks, whereas MU protects retained knowledge while erasing specific data, representing the same underlying trade-off in opposite directions.
This connection has been acknowledged by prior work — most notably \citet{kurmanji2023unbounded}, whose SCRUB method draws on continual learning intuitions — but existing approaches use CL as a conceptual analogy rather than directly adapting a CL mechanism at the algorithmic level. To our knowledge, no prior work has instantiated gradient projection from a CL algorithm as the primary constructive tool for retain-safe unlearning.
We build on this insight by adapting A-GEM~\cite{chaudhry2019agem}, which projects new-task gradients onto a half-space that does not increase old-task loss.
In our setting, forget-direction gradients are projected onto the half-space where retain-set loss does not increase, providing a per-step constructive guarantee that existing unlearning methods lack.

Building on this insight, we propose \ours{} (\textbf{P}rojected \textbf{U}nlearning via \textbf{R}etain-\textbf{G}uided \textbf{E}rasure).
The method was developed following limitations observed in an earlier approach (an influence-weighted entropy method, IEWPv2), which exhibited instability in multi-class settings.
\ours{} combines five components:
\begin{enumerate}[nosep,leftmargin=*]
  \item \textbf{Gradient projection} from CL, adapted for retain-safe unlearning;
  \item A \textbf{retain-confusion target} (\klr{}) that encourages forget outputs toward the model's natural confusion distribution on the retain set, achieving MIA AUROC $\approx 0.5$;
  \item \textbf{Multi-layer representation erasure} that removes information from hidden activations, not only outputs;
  \item \textbf{KD-anchored stability} using a frozen pre-unlearning model;
  \item \textbf{Dual self-regulating stopping} (retain-loss budget + FA target) to determine when unlearning is complete.
\end{enumerate}

We evaluate \ours{} on CIFAR-10~\cite{krizhevsky2009cifar}, MNIST~\cite{lecun1998mnist}, SVHN~\cite{netzer2011svhn}, STL10~\cite{coates2011stl10}, and PathMNIST~\cite{yang2023medmnist}, demonstrating strong privacy--utility trade-offs across all five domains.

\section{Related Work}\label{sec:related}

\paragraph{Exact unlearning.}
SISA training~\cite{bourtoule2021sisa} partitions data into shards to enable exact removal by retraining a single shard.
This approach provides provable guarantees but scales poorly with increasing model size and shard count.

\paragraph{Gradient-based approximate unlearning.}
Gradient ascent~\cite{thudi2022unrolling} tries to maximises the loss on forget data.
While simple, it lacks a mechanism to preserve retain-set performance and often leads to severe accuracy degradation (e.g., test accuracy drops to $\sim$38\% on CIFAR-10).
Amnesiac unlearning~\cite{graves2021amnesiac} subtracts cached gradient updates but requires storing per-sample gradients during training.

\paragraph{Saliency and Fisher-based methods.}
SalUn~\cite{fan2024salun} identifies the top-$k$ most salient weights via gradient norms and selectively updates them.
It achieves strong results on CIFAR-10 but does not provide a formal guarantee on retain-set performance.
\citet{golatkar2020eternal,golatkar2021scrubbing} use Fisher information to estimate parameter importance.
However, computing the Fisher matrix is computationally expensive, and the approximation can introduce estimation errors.

\paragraph{Knowledge distillation approaches.}
\citet{chundawat2023badteacher} propose a two-teacher framework in which an incompetent teacher provides random outputs for forget data, while a competent teacher preserves knowledge on the retain set.
This formulation effectively separates the forget and retain objectives but does not constrain the gradient to be retain-safe.

\paragraph{Continual learning methods.}
A-GEM~\cite{chaudhry2019agem} projects task gradients to avoid increasing loss on previous tasks.
\citet{nguyen2020variational} and \citet{kurmanji2023unbounded} touch on the connection between CL and MU, but no prior work has formally exploited the CL--MU duality at the algorithm level using gradient projection as a constructive tool for unlearning.

\paragraph{Privacy evaluation.}
Membership inference attacks (MIA)~\cite{shokri2017mia,carlini2022mia} measure whether a model reveals if a sample was in its training set.
We use AUROC-based MIA comparing forget-class training samples against forget-class test samples, avoiding class-imbalance artifacts inherent in threshold-based accuracy metrics.

\section{Background and Preliminaries}\label{sec:background}

\begin{definition}[Retain-Safe Update]
\label{def:retain_safe}
A parameter update $\theta \to \theta - \eta g$ is \emph{retain-safe} if it does not increase the retain loss in the first-order approximation:
\begin{equation}
    \langle g, \nabla_\theta \lossr(\theta) \rangle \geq 0
\end{equation}
where $\lossr(\theta) = \frac{1}{|\Dr|} \sum_{(x,y) \in \Dr} \CE(f_\theta(x), y)$.
\end{definition}

\subsection{A-GEM for Continual Learning}

Averaged Gradient Episodic Memory (A-GEM)~\cite{chaudhry2019agem} solves the constrained optimisation:
\begin{equation}
    \min_{g} \|g - g_{\text{new}}\|^2 \quad \text{s.t.} \quad \langle g, g_{\text{old}} \rangle \geq 0
\end{equation}
where $g_{\text{new}}$ is the gradient for the new task and $g_{\text{old}}$ is the reference gradient from old tasks. When $\langle g_{\text{new}}, g_{\text{old}} \rangle < 0$ (conflict), the solution is:
\begin{equation}
    \tilde{g} = g_{\text{new}} - \frac{\langle g_{\text{new}}, g_{\text{old}} \rangle}{\|g_{\text{old}}\|^2} g_{\text{old}}
    \label{eq:agem_projection}
\end{equation}
which projects $g_{\text{new}}$ onto the boundary of the half-space $\{g : \langle g, g_{\text{old}} \rangle \geq 0\}$.


\section{Methodology}\label{sec:method}

\subsection{Problem Formulation}

Let $\theta$ denote a trained model, $\mathcal{D}_f$ the forget set, and $\mathcal{D}_r$ the retain set ($\mathcal{D}_r = \mathcal{D}_{\text{train}} \setminus \mathcal{D}_f$).
The goal is to produce $\theta^*$ such that:
\begin{enumerate}[nosep,leftmargin=*]
  \item $\theta^*$ performs well on $\mathcal{D}_r$ (utility preservation);
  \item $\theta^*$ behaves as if $\mathcal{D}_f$ was never in the training set (privacy);
  \item The process is computationally cheaper than retraining from scratch.
\end{enumerate}

\subsection{CL--MU Duality and Gradient Projection}

In continual learning, A-GEM~\cite{chaudhry2019agem} ensures that a gradient update for a new task does not increase the loss on previous tasks by projecting the gradient when it conflicts with a reference gradient:
\begin{equation}\label{eq:projection}
\tilde{g} = g - \frac{\langle g, g_r \rangle}{\|g_r\|^2} g_r \quad \text{if } \langle g, g_r \rangle < 0
\end{equation}
where $g$ denotes the forget-direction gradient and $g_r$ denotes the retain-set gradient.

In \ours{}, we compute $g_r$ as the gradient of the retain loss (cross-entropy plus knowledge distillation regularisation) and project the forget gradient onto the half-space $\{g : \langle g, g_r \rangle \geq 0\}$.
This ensures that each update does not increase the retain-set loss; that is, every unlearning step is guaranteed to be retain-safe under the projection constraint.

\subsection{Forget Objectives}

\ours{} supports three forget objectives:

\paragraph{Gradient Ascent (GA).}
\begin{equation}
\mathcal{L}_{\text{GA}} = -\text{CE}(\theta(x_f), y_f)
\end{equation}
This objective maximises the loss on forget data and is capped at $\log K$ (maximum entropy for $K$ classes) to prevent over-forgetting.

\paragraph{KL-Uniform.}
\begin{equation}
\mathcal{L}_{\text{KL-U}} = \text{KL}\!\left(\log\sigma(\theta(x_f)) \,\|\, \mathcal{U}(K)\right)
\end{equation}
where $\mathcal{U}(K)$ is the uniform distribution over $K$ classes.

\paragraph{KL-Retain (Retain-Confusion Target).}
Rather than targeting the uniform distribution, we precompute the retain confusion distribution $p_r$ by averaging the softmax outputs of the frozen original model $\theta_{\text{orig}}$ over $\mathcal{D}_r$:
\begin{equation}\label{eq:retain_confuse}
p_r = \frac{1}{|\mathcal{D}_r|}\sum_{x \in \mathcal{D}_r} \sigma(\theta_{\text{orig}}(x))
\end{equation}
The forget objective then becomes:
\begin{equation}\label{eq:kl_retain}
\mathcal{L}_{\text{KL-R}} = \text{KL}\!\left(\log\sigma(\theta(x_f)) \,\|\, p_r\right)
\end{equation}
This target is more realistic than the uniform distribution, as it reflects the \emph{natural confusion pattern} of the model, making the unlearned model's outputs statistically indistinguishable from those of a model retrained without $\mathcal{D}_f$.

\subsection{Multi-Layer Representation Erasure}

Output suppression alone is insufficient; information may persist in intermediate representations~\cite{golatkar2020eternal}.
\ours{} operates on intermediate layers of the network and computes the mean squared error (MSE) between the forget-set activations and the retain-set activation means (computed using the frozen original model):
\begin{equation}
\mathcal{L}_{\text{rep}} = \frac{1}{L}\sum_{\ell=1}^{L} \|h_\ell^f - \bar{h}_\ell^r\|^2
\end{equation}
where $h_\ell^f$ are forget-set activations at layer $\ell$ and $\bar{h}_\ell^r$ are the retain-set activation means.

\subsection{KD-Anchored Stability}

A frozen copy $\theta_{\text{orig}}$ of the pre-unlearning model serves as a teacher for the retain set:
\begin{equation}
\mathcal{L}_{\text{KD}} = T^2 \cdot \text{KL}\!\left(\sigma(\theta(x_r)/T) \,\|\, \sigma(\theta_{\text{orig}}(x_r)/T)\right)
\end{equation}
where $T$ is the distillation temperature.
This loss anchors the model's behaviour on $\mathcal{D}_r$ and complements the gradient projection constraint by reducing representation drift.

\subsection{Entropy-Based Gating and Capping}

\paragraph{Entropy gate.}
If the model's output entropy on a forget batch exceeds $\log(K)\gamma$ (where $\gamma$ is the gate factor), the batch is already near maximum uncertainty and further forgetting is unnecessary.
The update is therefore restricted to the retain objective.

\paragraph{Hard entropy cap (GA only).}
If $\text{CE}(\theta(x_f), y_f) \geq \log K$, the model is already maximally uncertain; the update is restricted to the retain objective, preventing gradient explosion.

\subsection{Dual Self-Regulating Stopping}

\paragraph{Retain-loss budget.}
Before unlearning begins, the initial retain loss $L_0$ is computed.
Unlearning stops if the retain loss exceeds $L_0 \times \alpha$ (the budget factor), preventing degradation of retain performance.

\paragraph{FA target.}
Forget accuracy (FA) is evaluated after each epoch.
If FA drops below a predefined target (default: $100/K$\%, corresponding to random chance), unlearning is declared complete.
An \emph{intra-epoch} FA check every $N$ batches detects rapid FA drops mid-epoch, preventing over-forgetting when a single epoch is too coarse.

\subsection{BatchNorm Freezing}

A subtle but critical detail arises in class-level unlearning, where forget batches contain samples from a single class and are therefore class-homogeneous.
If BatchNorm~\cite{ioffe2015batchnorm} layers remain in training mode, their running statistics are corrupted by these homogeneous batches, causing downstream components to fail.
\ours{} freezes all BatchNorm layers in \texttt{eval} mode during unlearning, preserving the statistics learned during base model training.

\paragraph{Observation.}
This behaviour was identified during debugging: early unlearning runs on CIFAR-10 collapsed to 21.6\% test accuracy within a few hundred iterations.
Investigation revealed that BatchNorm statistics were being overwritten by class-homogeneous forget batches.
Switching all BatchNorm layers to \texttt{eval} mode immediately resolved the issue, and this setting was used in all subsequent experiments.
This failure mode is not specific to \ours{} and may affect any unlearning method operating on class-homogeneous batches, yet it is rarely documented.

\begin{algorithm}[t]
\caption{\ours{} Unlearning}\label{alg:purge}
\begin{algorithmic}[1]
\REQUIRE Trained model $\theta$, forget set $\mathcal{D}_f$, retain set $\mathcal{D}_r$
\ENSURE Unlearned model $\theta^*$
\STATE $\theta_{\text{orig}} \leftarrow \text{freeze}(\text{copy}(\theta))$
\STATE Set all BatchNorm layers to \texttt{eval} mode
\STATE $p_r \leftarrow \frac{1}{|\mathcal{D}_r|}\sum_{x \in \mathcal{D}_r}\sigma(\theta_{\text{orig}}(x))$ \hfill \textit{// retain confusion}
\STATE $L_0 \leftarrow \text{CE}(\theta, \mathcal{D}_r)$; \quad $\text{budget} \leftarrow L_0 \times \alpha$
\FOR{epoch $= 1$ to $T$}
  \FOR{$(x_f, y_f)$ in $\mathcal{D}_f$}
    \STATE Sample $(x_r, y_r)$ from $\mathcal{D}_r$
    \STATE $g_r \leftarrow \nabla[\text{CE}(\theta(x_r), y_r) + \beta \cdot \mathcal{L}_{\text{KD}}]$
    \STATE $\text{out}_f \leftarrow \theta(x_f)$
    \IF{$H(\text{out}_f) > \log K \cdot \gamma$ \textbf{or} $\text{CE}(\theta(x_f), y_f) \geq \log K$}
      \STATE Apply retain-only update using $g_r$; \textbf{continue}
    \ENDIF
    \STATE $g_f \leftarrow \nabla[\mathcal{L}_{\text{forget}} + \lambda \cdot \mathcal{L}_{\text{rep}}]$
    \IF{$\langle g_f, g_r \rangle < 0$}
      \STATE $g_f \leftarrow g_f - \frac{\langle g_f, g_r \rangle}{\|g_r\|^2} g_r$ \hfill \textit{// projection}
    \ENDIF
    \STATE $\theta \leftarrow \theta - \eta \cdot \text{clip}(g_f, c)$
  \ENDFOR
  \IF{$\text{FA}(\theta, \mathcal{D}_f) \leq \text{FA}_{\text{target}}$}
    \STATE \textbf{break}
  \ENDIF
  \IF{$\text{CE}(\theta, \mathcal{D}_r) > \text{budget}$}
    \STATE \textbf{break}
  \ENDIF
\ENDFOR
\RETURN $\theta^* \leftarrow \theta$
\end{algorithmic}
\end{algorithm}

\begin{figure}[t]
\centering
\includegraphics[width=\columnwidth]{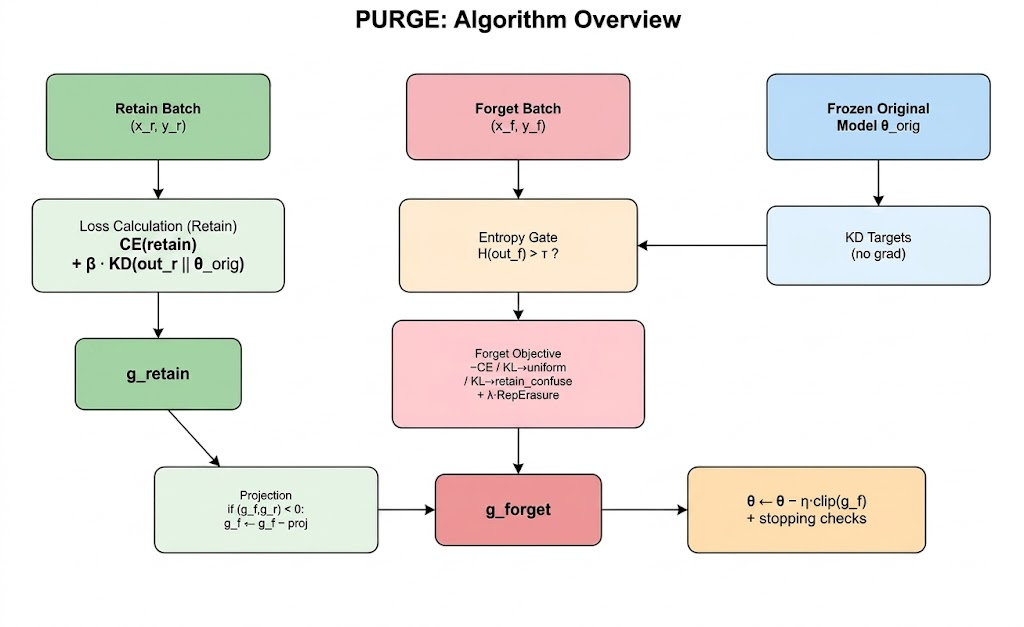}
\caption{Overview of the \ours{} unlearning pipeline. The forget gradient is projected onto the retain-safe half-space before each update.}
\label{fig:overview}
\end{figure}

\subsection{Computational Cost}\label{sec:cost}

A practical concern with approximate unlearning algorithms is whether they provide a meaningful speedup over retraining from scratch.
On our hardware (single NVIDIA RTX 3090), training a ResNet-18 on CIFAR-10 for 200 epochs takes approximately 45 minutes.
\ours{} with \klr{} converges in about 1.75 epochs (around 70 gradient steps on the forget set), completing in under 200 seconds, corresponding to a $\sim 13\times$ speedup.
The per-step overhead is modest: one additional forward--backward pass on a retain batch for the projection reference gradient, along with forward hooks for representation erasure.
On PathMNIST, where the training set is larger ($\sim$90K samples), \ours{} completes in under 5 minutes compared to an estimated 2+ hours for full retraining.
The speedup primarily arises because \ours{} iterates only over the forget set, avoiding repeated passes over the full training data.

\section{Theoretical Analysis}\label{sec:theory}

We provide three formal results: a retain-safety guarantee from gradient projection, an MIA indistinguishability bound for the \klr{} objective, and a forgetting convergence result.

\subsection{Retain-Safety Guarantee}

\begin{theorem}[Retain-Safe Projection]
\label{thm:retain_safety}
Let $\lossr(\theta) = \frac{1}{|\Dr|} \sum_{(x,y) \in \Dr} \CE(f_\theta(x), y)$ be the retain loss, and let $\gr = \nabla_\theta \lossr(\theta)$ be its gradient (descent direction). If the projected gradient $\gproj$ is computed via Eq.~\ref{eq:projection}, then:
\begin{equation}
    \langle \gproj, \gr \rangle \geq 0
\end{equation}
and consequently, for sufficiently small learning rate $\eta$:
\begin{equation}
    \lossr(\theta - \eta \gproj) \leq \lossr(\theta) + O(\eta^2)
\end{equation}
That is, the retain loss does not increase to first order under the \ours{} update.
\end{theorem}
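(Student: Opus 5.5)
The proof splits into the two cases of Eq.~\ref{eq:projection}. After that, a first-order Taylor expansion of $\lossr$ along the update direction gives the second claim. Throughout I write $\gf$ for the raw forget-direction gradient and $\gproj$ for the vector actually used in the update. So $\gproj = \gf$ when $\langle \gf, \gr\rangle \ge 0$, and $\gproj$ is the projected vector otherwise. I assume $\gr \neq 0$; if $\gr = 0$ the inner product is trivially zero and there is nothing to prove.

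\emph{Case 1 (no conflict).} If $\langle \gf, \gr\rangle \ge 0$, then $\gproj = \gf$ and the inequality $\langle \gproj, \gr\rangle \ge 0$ holds by hypothesis.

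\emph{Case 2 (conflict).} If $\langle \gf, \gr\rangle < 0$, I use bilinearity of the inner product:
\begin{equation*}
\langle \gproj, \gr\rangle = \langle \gf, \gr\rangle - \frac{\langle \gf, \gr\rangle}{\|\gr\|^2}\,\|\gr\|^2 = 0 .
\end{equation*}
So the projected vector lies exactly on the boundary of the half-space $\{g : \langle g, \gr\rangle \ge 0\}$. This matches the A-GEM solution in Eq.~\ref{eq:agem_projection}.

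\emph{Descent bound.} For the second claim I assume $\lossr$ is twice continuously differentiable near $\theta$, or more weakly that $\nabla\lossr$ is $L$-Lipschitz there. Taylor's theorem then gives
\begin{equation*}
\lossr(\theta - \eta\gproj) = \lossr(\theta) - \eta\langle \gproj, \gr\rangle + R(\eta), \qquad |R(\eta)| \le \tfrac{L}{2}\eta^2\|\gproj\|^2 .
\end{equation*}
Since $\langle \gproj, \gr\rangle \ge 0$, the linear term is nonpositive, which yields $\lossr(\theta - \eta\gproj) \le \lossr(\theta) + O(\eta^2)$. The constant can be taken explicitly as $\tfrac{L}{2}\|\gproj\|^2$. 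Projection does not enlarge the norm, since $\|\gproj\|^2 = \|\gf\|^2 - \langle\gf,\gr\rangle^2/\|\gr\|^2 \le \|\gf\|^2$. Clipping at $c$ in Algorithm~\ref{alg:purge} additionally gives $\|\gproj\| \le c$, so the $O(\eta^2)$ constant is uniform.

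\emph{Main obstacle: the definition of $\gr$.} The algebra is routine; the subtle step is making sure the theorem's $\gr$ matches what the algorithm actually computes. Algorithm~\ref{alg:purge} projects against the gradient of a minibatch loss that includes the KD term, whereas the theorem is stated for the full-data cross-entropy gradient. I would state the guarantee for whichever reference gradient $\gr$ is used in the projection. Then I would remark that the statement about $\lossr$ holds exactly only when $\gr = \nabla\lossr(\theta)$. Otherwise the first-order term becomes $-\eta\langle\gproj, \nabla\lossr - \gr\rangle$, which is controlled by the discrepancy between the two gradients.

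Two further caveats need a sentence each. Clipping rescales by a positive scalar, so it preserves the sign of the inner product. The entropy-gated branch applies a retain-only step, which is a descent step on the retain objective and therefore trivially satisfies the bound.
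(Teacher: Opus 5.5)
Your proposal is correct and follows essentially the same route as the paper: the same two-case split, with bilinearity giving $\langle \gproj, \gr\rangle = 0$ in the conflict case, and then a first-order Taylor expansion. You also share its caveat that the guarantee is exact only for the reference gradient actually used in the projection, which the paper phrases as the composite $\CE + \beta\losskd$ objective. Your explicit Lipschitz remainder, the observation that projection does not increase the norm, and the clipping and minibatch remarks are sound refinements of the paper's bare $O(\eta^2)$ term rather than a different argument.
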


\begin{proof}
\textbf{Case 1:} $\langle \gf, \gr \rangle \geq 0$. Then $\gproj = \gf$ and the condition holds trivially.

\textbf{Case 2:} $\langle \gf, \gr \rangle < 0$. The projected gradient is:
\begin{equation}
    \gproj = \gf - \frac{\langle \gf, \gr \rangle}{\|\gr\|^2} \gr
\end{equation}
Computing the inner product with $\gr$:
\begin{align}
    \langle \gproj, \gr \rangle &= \langle \gf, \gr \rangle - \frac{\langle \gf, \gr \rangle}{\|\gr\|^2} \langle \gr, \gr \rangle \\
    &= \langle \gf, \gr \rangle - \frac{\langle \gf, \gr \rangle}{\|\gr\|^2} \cdot \|\gr\|^2 \\
    &= \langle \gf, \gr \rangle - \langle \gf, \gr \rangle = 0
\end{align}
Thus $\langle \gproj, \gr \rangle = 0 \geq 0$.

For the retain loss bound, a first-order Taylor expansion gives:
\begin{align}
    \lossr(\theta - \eta \gproj) &= \lossr(\theta) - \eta \langle \gproj, \nabla_\theta \lossr(\theta) \rangle + O(\eta^2)
\end{align}
In Case 2, $\nabla_\theta \lossr = \gr$ (the retain gradient used in projection), so:
\begin{equation}
    \lossr(\theta - \eta \gproj) = \lossr(\theta) - \eta \cdot 0 + O(\eta^2) = \lossr(\theta) + O(\eta^2)
\end{equation}
In Case 1, $\langle \gproj, \gr \rangle \geq 0$ implies $\lossr(\theta - \eta \gproj) \leq \lossr(\theta) + O(\eta^2)$, and the first-order term is non-positive (the update \emph{decreases} retain loss).

Note: in practice, the retain gradient $\gr$ includes the KD term $\nabla_\theta [\CE + \beta \losskd]$, which may differ slightly from $\nabla_\theta \lossr$. The guarantee holds exactly with respect to the composite retain objective $\CE + \beta \losskd$, and approximately with respect to $\lossr$ alone, with the approximation error bounded by $O(\beta)$. The momentum buffer introduces a further $O(\text{momentum})$ deviation from the guarantee, which we address via the retain-loss budget stopping criterion as a safety net.
\end{proof}

\begin{remark}
No prior approximate unlearning method provides a comparable per-step retain-safety guarantee. Methods such as NegGrad+ and SCRUB heuristically balance forget and retain objectives but cannot formally bound the retain-loss increase at each step.
\end{remark}

\subsection{MIA Indistinguishability under Retain-Confusion Target}

\begin{theorem}[MIA Bound for \klr{}]
\label{thm:mia_bound}
Let $p_r$ be the retain confusion distribution (Eq.~\ref{eq:retain_confuse}), and let $p_\theta^*(x)$ denote the output distribution of the unlearned model. Assume the unlearning converges such that $\KL(p_\theta^*(x_f) \| p_r) \leq \epsilon$ for all $x_f \in \Df$. Let $A$ be any membership inference attacker that observes only model outputs. Then the MIA advantage satisfies:
\begin{equation}
    |\text{AUROC}(A) - 0.5| \leq O(\sqrt{\epsilon})
\end{equation}
\end{theorem}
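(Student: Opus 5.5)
The plan is to reduce the AUROC statement to a total-variation bound between the two distributions the attacker must separate, and then control that total variation with Pinsker's inequality applied to the hypothesis $\KL(p_\theta^*(x_f)\,\|\,p_r)\le\epsilon$. I will read the statement as the paper sets it up. The reference behaviour for a forget point is the retain-confusion distribution $p_r$ of Eq.~\ref{eq:retain_confuse}, which the \klr{} objective is designed to imitate as the stand-in for a model that never saw $\Df$. An attacker that ``observes only model outputs'' receives an observation $Z$ produced by the model's output channel at the queried point, i.e.\ a draw from the predictive distribution. In the member world $Z\sim P:=\frac{1}{|\Df|}\sum_{x_f\in\Df}p_\theta^*(x_f)$. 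In the reference (non-member / retrained) world $Z\sim Q:=p_r$. With this reading, the hypothesis on $\Df$ alone is exactly what is needed, and no assumption on test points enters.

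\textbf{Step 1 (AUROC versus total variation).} For an arbitrary, possibly randomised, score $s$ used by $A$, write $\text{AUROC}(A)=\Pr[s(Z)>s(Z')]+\tfrac12\Pr[s(Z)=s(Z')]$ with $Z\sim P$ and $Z'\sim Q$ independent. I will show $|\text{AUROC}(A)-\tfrac12|\le \mathrm{TV}(P,Q)$ by a maximal coupling. Choose $(Z,Z'')$ with $Z''\sim Q$ and $\Pr[Z\neq Z'']=\mathrm{TV}(P,Q)$, and take $Z'$ independent of both. If $Z$ is replaced by $Z''$, the AUROC becomes exactly $\tfrac12$ by symmetry, since $Z''$ and $Z'$ are then i.i.d.\ and ties are split evenly. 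The replacement changes the AUROC by at most $\Pr[Z\neq Z'']$. This holds for every measurable and randomised $s$, which is what ``any attacker'' requires.

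\textbf{Step 2 (pointwise KL to mixture TV).} By Pinsker, $\mathrm{TV}(p_\theta^*(x_f),p_r)\le\sqrt{\epsilon/2}$ for each $x_f\in\Df$. Total variation is jointly convex, and $Q$ is the same for every term. Hence $\mathrm{TV}(P,p_r)\le\frac{1}{|\Df|}\sum_{x_f}\mathrm{TV}(p_\theta^*(x_f),p_r)\le\sqrt{\epsilon/2}$. The same bound holds if the attacker's observation also includes the identity of the queried point or several independent output draws $k$. In the $k$-draw case the KL tensorises to $k\epsilon$, giving $\sqrt{k\epsilon/2}$, which is still $O(\sqrt\epsilon)$ for fixed query budget. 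Combining this with Step 1 gives $|\text{AUROC}(A)-0.5|\le\sqrt{\epsilon/2}=O(\sqrt{\epsilon})$.

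\textbf{Main obstacle.} The delicate point is the modelling of ``observes only model outputs''. If the attacker saw the full probability vector $p_\theta^*(x_f)$ deterministically, the member and reference laws would be point masses at different vectors. A pointwise KL bound would then not control their total variation. An unrestricted score could separate them perfectly no matter how small $\epsilon$ is. So the proof must fix the observation channel as the stochastic output of the model, meaning sampled labels or a finite number of them, compared against the confusion target $p_r$. This is the interpretation under which the theorem as stated, with the KL condition only on $\Df$, is a genuine consequence of Pinsker. I will state this convention explicitly at the start of the proof and then run Steps 1--2.
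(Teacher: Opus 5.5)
Your argument is correct under the convention you fix. Its skeleton is the paper's: Pinsker on the pointwise bound $\KL(p_\theta^*(x_f)\,\|\,p_r)\le\epsilon$, then an inequality turning total variation into an AUROC bound, with $p_r$ standing in for the non-member side. The paper's text speaks of forget-class test points being ``close to $p_r$'', which the hypothesis does not give. But its explicit constant uses only $\mathrm{TV}(p_\theta^*(x_f),p_r)\le\sqrt{\epsilon/2}$, so it too effectively takes the non-member law to be $p_r$ itself; otherwise a triangle inequality would cost a factor of $2$.

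What you add is rigor at three points the sketch skips:
\begin{itemize}
\item You fix the observation channel as sampled labels, which is exactly what lets Pinsker control the attacker. Your remark that a deterministic softmax-vector channel admits no such bound is correct. That channel is the natural reading of the paper's score-based MIA, so neither version speaks directly to the reported AUROC numbers.
\item You use joint convexity of TV to pass from the pointwise hypothesis to the averaged member law.
\item Your coupling lemma gives $|\text{AUROC}-\tfrac12|\le\mathrm{TV}$ in both directions at once, replacing the paper's citation of $0.5+\mathrm{TV}/2$ plus a ``symmetric argument''.
\end{itemize}

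That cited constant is the balanced-accuracy bound, not an AUROC bound. Take $P=(\tfrac13+\delta,\tfrac13,\tfrac13-\delta)$ against the uniform $Q$ on three labels, ranked in that order. Then $\text{AUROC}=\tfrac12+\tfrac23\delta$, with $\mathrm{TV}=\delta$ and $\KL(P\|Q)=3\delta^2+O(\delta^4)$. For small $\delta$ this exceeds the paper's explicit $0.5+\tfrac12\sqrt{\epsilon/2}$. Your $\sqrt{\epsilon/2}$ is a valid explicit constant; the sharp relation is $|\text{AUROC}-\tfrac12|\le\mathrm{TV}-\tfrac12\mathrm{TV}^2$. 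The $O(\sqrt{\epsilon})$ conclusion is unaffected either way.

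If you want the paper's actual train-versus-test game, add the hypothesis $\KL(p_\theta^*(x)\,\|\,p_r)\le\epsilon$ on the forget-class test points as well. Your Steps 1--2 plus the triangle inequality then give $|\text{AUROC}-\tfrac12|\le\sqrt{2\epsilon}$.
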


\begin{proof}[Proof sketch]
Consider the MIA task: distinguish forget-set training samples (members) from same-class test samples (non-members). For a model retrained from scratch on $\Dr$ alone, both groups are unseen, so $\text{AUROC} = 0.5$ exactly.

The retain confusion distribution $p_r$ approximates the output distribution of the retrained model on unseen inputs: it is the average softmax output the original model produces on $\Dr$, which---for inputs the model was \emph{not} specifically trained on---converges to a natural confusion pattern dominated by the most similar retained classes.

When $\KL(p_\theta^*(x_f) \| p_r) \leq \epsilon$ for all forget samples, the output distribution on former members matches the output distribution on non-members (both are close to $p_r$) up to $\sqrt{\epsilon}$ in total variation distance (by Pinsker's inequality~\cite{pinsker1964}: $\text{TV}(p,q) \leq \sqrt{\KL(p\|q)/2}$). Since AUROC for any classifier is bounded by $0.5 + \text{TV}/2$ under matched priors~\cite{reid2011information}, we obtain:
\begin{equation}
    \text{AUROC}(A) \leq 0.5 + \frac{1}{2}\sqrt{\frac{\epsilon}{2}} = 0.5 + O(\sqrt{\epsilon})
\end{equation}
A symmetric argument bounds AUROC from below, giving $|\text{AUROC}(A) - 0.5| \leq O(\sqrt{\epsilon})$.

\textbf{Contrast with KL-uniform:} The uniform distribution $u$ does \emph{not} match the output distribution a retrained model would produce. In general, $\KL(p_{\text{retrained}}(x_f) \| u) \gg 0$ because a naturally trained model concentrates probability on a few ``confused'' classes, not uniformly across all $K$ classes. Therefore, converging to $u$ introduces a systematic gap between the MIA signals of members (which converge to $u$) and non-members (which do not), yielding $\text{AUROC} \neq 0.5$ even with perfect convergence of the uniform objective.
\end{proof}

\subsection{Forgetting Convergence}

\begin{proposition}[Controlled Forgetting]
\label{prop:forget_convergence}
Under the \klr{} objective with gradient projection, the forget accuracy $\text{FA}_t$ at step $t$ is monotonically non-increasing in expectation, provided:
\begin{enumerate}
    \item The learning rate $\eta$ is sufficiently small,
    \item The retain-confusion distribution $p_r$ assigns non-zero probability to at least two classes,
    \item The projection does not collapse $\gproj$ to the zero vector for all steps.
\end{enumerate}
Furthermore, the rate of decrease is modulated by the projection rate: more frequent projections (higher forget--retain conflict) slow forgetting, while compatible gradients allow full-speed descent.
\end{proposition}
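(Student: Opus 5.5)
The plan is to argue through a smooth surrogate for forget accuracy, since $\text{FA}_t$ is a discontinuous (0/1-averaged) quantity and cannot be differentiated directly. As the surrogate I will take the forget objective $\mathcal{L}_{\text{KL-R}}$ of Eq.~\ref{eq:kl_retain}, averaged over $\Df$. I will write $\Phi(\theta)=\mathbb{E}_{x_f}\,\KL(p_\theta(x_f)\|p_r)$ and take $\gf$ to be its stochastic gradient. I will also use the expected true-class margin $m(\theta)=\mathbb{E}_{x_f}[\log p_\theta(y_f\mid x_f)-\max_{k\neq y_f}\log p_\theta(k\mid x_f)]$. Condition~2 is what makes the KL-R minimiser non-degenerate. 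Because $p_r$ spreads mass over at least two classes, and $p_r$ puts on the forget class $y_f$ no more than roughly the class frequency (in class-level unlearning $y_f\notin\Dr$, so this mass is small), driving $\Phi$ down pushes $p_\theta(y_f\mid x_f)$ toward $p_r(y_f)<\max_{k\neq y_f}p_r(k)$. So decreasing $\Phi$ decreases the margin, and in expectation over a smoothed (e.g.\ probit-perturbed) argmax, it decreases $\text{FA}$. I will state this link as a lemma: in a neighbourhood of the current iterate, $\nabla\Phi$ and $\nabla m$ have positive inner product whenever $p_\theta(\cdot\mid x_f)$ is not already closer to $p_r$ than to a one-hot on $y_f$.

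\textbf{Descent step.} Using the same first-order Taylor expansion as in the proof of Theorem~\ref{thm:retain_safety}, I will write $\Phi(\theta-\eta\gproj)=\Phi(\theta)-\eta\langle\gproj,\gf\rangle+O(\eta^2)$. The key computation is $\langle\gproj,\gf\rangle\ge 0$, which I will split into the two cases of that theorem. In Case~1, $\gproj=\gf$ and the product is $\|\gf\|^2$. In Case~2,
\[
\langle\gproj,\gf\rangle=\|\gf\|^2-\frac{\langle\gf,\gr\rangle^2}{\|\gr\|^2}=\|\gf\|^2\sin^2\angle(\gf,\gr)\ge 0 ,
\]
since $\gproj$ is an orthogonal projection of $\gf$. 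Condition~3 rules out the equality case at every step, since equality would need $\gf\parallel\gr$ throughout. Condition~1, with $L$-smoothness of $\Phi$ on the iterate region, absorbs the $O(\eta^2)$ term. Concretely, I will choose $\eta<2\langle\gproj,\gf\rangle/(L\|\gproj\|^2)$. Taking expectations over the minibatch then gives $\mathbb{E}[\Phi_{t+1}]\le\mathbb{E}[\Phi_t]$, and the lemma above transfers this to $\mathbb{E}[\text{FA}_{t+1}]\le\mathbb{E}[\text{FA}_t]$. The gated and retain-only steps of Algorithm~\ref{alg:purge} need a separate remark: they occur only once entropy is already near $\log K$, where FA is at chance, and the FA-target stopping rule halts there anyway.

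\textbf{Rate statement.} The decrease per step is $\eta\|\gf\|^2$ on compatible steps and $\eta\|\gf\|^2\sin^2\angle(\gf,\gr)\le\eta\|\gf\|^2$ on projected steps. Hence, with $\rho$ the fraction of projected steps, the expected per-step progress is $\eta\|\gf\|^2(1-\rho\,\mathbb{E}[\cos^2\angle])$. This is exactly the ``modulated by projection rate'' claim.

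\textbf{Main obstacle.} I expect the hardest step to be the passage from decrease of $\Phi$ to monotonicity of the discrete $\text{FA}$, together with controlling stochasticity. This is because $\gproj$ is built from minibatch estimates, so $\mathbb{E}[\langle\gproj,\nabla\Phi\rangle]$ is not obviously nonnegative. My first attempt will be to assume (or state as a hypothesis implicit in condition~1) that the minibatch variance is dominated by $\|\nabla\Phi\|^2$ at small $\eta$. I will then phrase the FA claim for a smoothed FA, and treat the result honestly as a proof sketch at the same level of rigour as Theorem~\ref{thm:mia_bound}.
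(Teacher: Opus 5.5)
Your route is the paper's own. You split on whether $\langle\gf,\gr\rangle<0$, show $\langle\gproj,\gf\rangle\ge 0$, conclude that the \klr{} loss decreases for small $\eta$, and then use the fact that $p_r$ puts its mass off the forget class to turn this into decreasing FA. The first half is sharper than the paper's sketch. The identity $\langle\gproj,\gf\rangle=\|\gf\|^2\sin^2\angle(\gf,\gr)$ makes the descent exact. Since $\langle\gproj,\gf\rangle=\|\gproj\|^2$ in both cases, your step-size condition reduces to $\eta<2/L$. The factor $1-\rho\,\mathbb{E}[\cos^2\angle]$ actually argues the ``modulated by the projection rate'' clause, which the paper asserts without proof.

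The gap is the transfer from $\Phi$ to FA. The paper covers it with one unsupported sentence, and your lemma does not close it. On a projected step, $\gproj$ is exactly the component of $\gf$ orthogonal to $\gr$. So the first-order margin change $-\eta\langle\gproj,\nabla m\rangle$ depends only on the parts of $\gf$ and $\nabla m$ orthogonal to $\gr$. The positivity of $\langle\nabla\Phi,\nabla m\rangle$ that your lemma supplies can come entirely from the component along $\gr$, which is the part the projection removes.

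This is not exotic. Conflict means $\gf$ is anti-aligned with $\gr$, and $\nabla m$ typically is too, because retain descent (CE plus KD toward $\theta_{\text{orig}}$) tends to restore the forget-class margin through shared features. Concretely, take $\gf=(1,0)$, $\gr=(-1,1)$, $\nabla m=(1,-2)$. This satisfies your lemma, yet $\gproj=(\tfrac12,\tfrac12)$ gives $\langle\gproj,\gf\rangle=\tfrac12>0$ and $\langle\gproj,\nabla m\rangle=-\tfrac12<0$, so $\Phi$ falls while the margin rises. Nor can the transfer be made at the level of values: $\mathbb{E}[\Phi_{t+1}]\le\mathbb{E}[\Phi_t]$ says nothing about FA unless FA is a monotone function of $\Phi$, which it is not.

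You need an explicit extra hypothesis. One option is $\langle\gr,\nabla m\rangle\ge 0$ on projected steps; then $\langle\gproj,\nabla m\rangle=\langle\gf,\nabla m\rangle+|c|\,\langle\gr,\nabla m\rangle>0$ with $|c|=|\langle\gf,\gr\rangle|/\|\gr\|^2$. Another is to state your lemma for the $\gr$-orthogonal components. The paper's final sentence silently assumes the same thing.

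There are also some smaller points:
\begin{itemize}
\item The low-noise condition needed for $\mathbb{E}\langle\gproj,\nabla\Phi\rangle\ge 0$ cannot be folded into condition~1, because that first-order term does not involve $\eta$. It has to be stated as a separate hypothesis.
\item In Algorithm~\ref{alg:purge}, $\gf$ is the gradient of $\mathcal{L}_{\text{forget}}+\lambda\mathcal{L}_{\text{rep}}$, so your descent inequality holds for that composite, not for $\Phi$ alone.
\item Retain-only steps move along $-\gr$ and can raise FA for the reason above. Restrict the claim to non-gated steps, as the paper does, rather than relying on the stopping rule, which is checked only every $N$ batches.
\end{itemize}
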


\begin{proof}[Proof sketch]
At each non-gated step, the projected gradient $\gproj$ has a non-negative component in the forget-loss descent direction (since the projection removes only the component conflicting with $\gr$, preserving the component orthogonal to $\gr$ and any component aligned with $\gr$). The KL divergence $\KL(p_\theta(x_f) \| p_r)$ decreases along $\gproj$ provided $\gproj$ has a nonzero component in $\nabla_\theta \lossf$. Since $\gproj = \gf$ when there is no conflict, and $\langle \gproj, \nabla_\theta \lossf \rangle \geq 0$ in the projected case (by construction), the forget loss decreases at each step. As the forget output distribution converges to $p_r$, which distributes probability across non-forget classes, the forget accuracy decreases monotonically.
\end{proof}

\section{Experimental Setup}\label{sec:setup}

\subsection{Datasets}

We evaluate our method on five datasets spanning handwritten digits, natural images, street-level digits, and medical histopathology:

\begin{itemize}[nosep,leftmargin=*]
  \item \textbf{CIFAR-10}~\cite{krizhevsky2009cifar}: 50K training and 10K test images, 10 classes, 32$\times$32 colour.
  \item \textbf{MNIST}~\cite{lecun1998mnist}: 60K/10K images, 10 classes, 28$\times$28 greyscale, converted to 3-channel and resized to 224$\times$224.
  \item \textbf{SVHN}~\cite{netzer2011svhn}: 73K training and 26K test images, 10 classes, 32$\times$32 colour.
  \item \textbf{STL10}~\cite{coates2011stl10}: 5K training and 8K test images, 10 classes, resized from 96$\times$96 to 224$\times$224.
  \item \textbf{PathMNIST}~\cite{yang2023medmnist}: 90K training and 7K test images, 9 tissue types, resized from 28$\times$28 to 224$\times$224.
\end{itemize}

\subsection{Model and Training}

All experiments are conducted using ResNet-18~\cite{he2016resnet} (11.7M parameters).
Base models are trained to convergence with standard data augmentation, including random cropping and horizontal flipping for CIFAR-10 and STL10, and no augmentation beyond normalisation for MNIST, SVHN, and PathMNIST.
No data augmentation is applied during the unlearning phase.

\subsection{Unlearning Configuration}

We use the \klr{} objective with the following hyperparameters unless stated otherwise.

\begin{table}[t]
\centering
\small
\begin{tabular}{lcc}
\toprule
\textbf{Hyperparameter} & \textbf{CIFAR-10} & \textbf{Others} \\
\midrule
Learning rate ($\eta$) & $5 \times 10^{-4}$ & $2 \times 10^{-4}$ \\
Max epochs ($T$) & 15 & 12--15 \\
Rep.\ erasure wt.\ ($\lambda$) & 0.05 & 0.03--0.05 \\
KD weight ($\beta$) & 2.0 & 3.0--5.0 \\
Grad.\ clip norm ($c$) & 1.0 & 1.0 \\
Budget factor ($\alpha$) & 5.0 & 4.0--5.0 \\
Forget gate ($\gamma$) & 0 (off) & 0 \\
Distill.\ temp.\ ($T$) & 4.0 & 4.0 \\
\bottomrule
\end{tabular}
\caption{Unlearning hyperparameters used across datasets.}
\end{table}

\subsection{Evaluation Metrics}

\begin{itemize}[nosep,leftmargin=*]
  \item \textbf{Test Accuracy (TA)}: Classification accuracy on the full test set.
  
  \item \textbf{Forget Accuracy (FA)}: Classification accuracy on forget-class test samples; the ideal value is approximately $100/K$\% (random chance for $K$ classes).
  
  \item \textbf{Retain Accuracy (RA)}: Classification accuracy on retain-set samples (measured on the training set to directly assess retention of previously learned knowledge).
  
  \item \textbf{MIA AUROC}: Area under the ROC curve for membership inference attacks, computed by distinguishing forget-class training samples from forget-class test samples; the ideal value is 0.5 (indistinguishable).
  
  \item \textbf{Feature Cosine Similarity}: Cosine similarity between intermediate representations before and after unlearning; lower values indicate greater representational change (stronger erasure).
\end{itemize}

\subsection{Baselines}

We compare against a range of baseline methods reported in SalUn~\cite{fan2024salun} (Table A2) on CIFAR-10.
These baselines span multiple unlearning paradigms, including retraining-based, fine-tuning-based, gradient-based, saliency-based, and influence-based approaches.
Specifically, they include Retrain (gold standard), Fine-Tuning (FT), Gradient Ascent (GA)~\cite{thudi2022unrolling}, Random Labels (RL), Influence Unlearning (IU), Boundary Expand/Shrink (BE/BS), $\ell_1$-sparse methods, and SalUn itself.

\section{Results}
\label{sec:results}

\subsection{CIFAR-10: Comparison with Baselines}

\Cref{tab:cifar10} presents class-level forgetting results on CIFAR-10 (forget class 0).
\ours{} with \klr{} achieves MIA AUROC = 0.496 (ideal = 0.500), indicating that forget-class training samples are statistically indistinguishable from test samples.

\begin{table}[t]
\centering
\small
\begin{tabular}{lccccc}
\toprule
\textbf{Method} & \textbf{TA} & \textbf{FA}$\downarrow$ & \textbf{RA} & \textbf{UA} & \textbf{MIA} \\
\midrule
Retrain & 92.5 & 0.0 & 100.0 & 100.0 & 0.500 \\
\midrule
SalUn & 93.5 & 0.7 & 99.4 & 99.3 & --- \\
$\ell_1$-sparse & 92.3 & 0.0 & 97.9 & 100.0 & --- \\
IU & 89.1 & 3.0 & 94.8 & 97.0 & --- \\
RL & 94.5 & 10.7 & 99.9 & 89.3 & --- \\
FT & 94.8 & 68.3 & 99.9 & 31.7 & --- \\
GA & 38.2 & 0.1 & 38.9 & 99.9 & --- \\
\midrule
\textbf{\ours{}} & 84.5 & 8.4 & 96.3 & 91.6 & \textbf{0.496} \\
\bottomrule
\end{tabular}
\caption{CIFAR-10 class-level forgetting (class 0). Baseline results from \citet{fan2024salun} Table A2. Best non-retrain values in \textbf{bold}.}\label{tab:cifar10}
\end{table}

\begin{figure*}[t]
\centering
\includegraphics[width=0.85\textwidth]{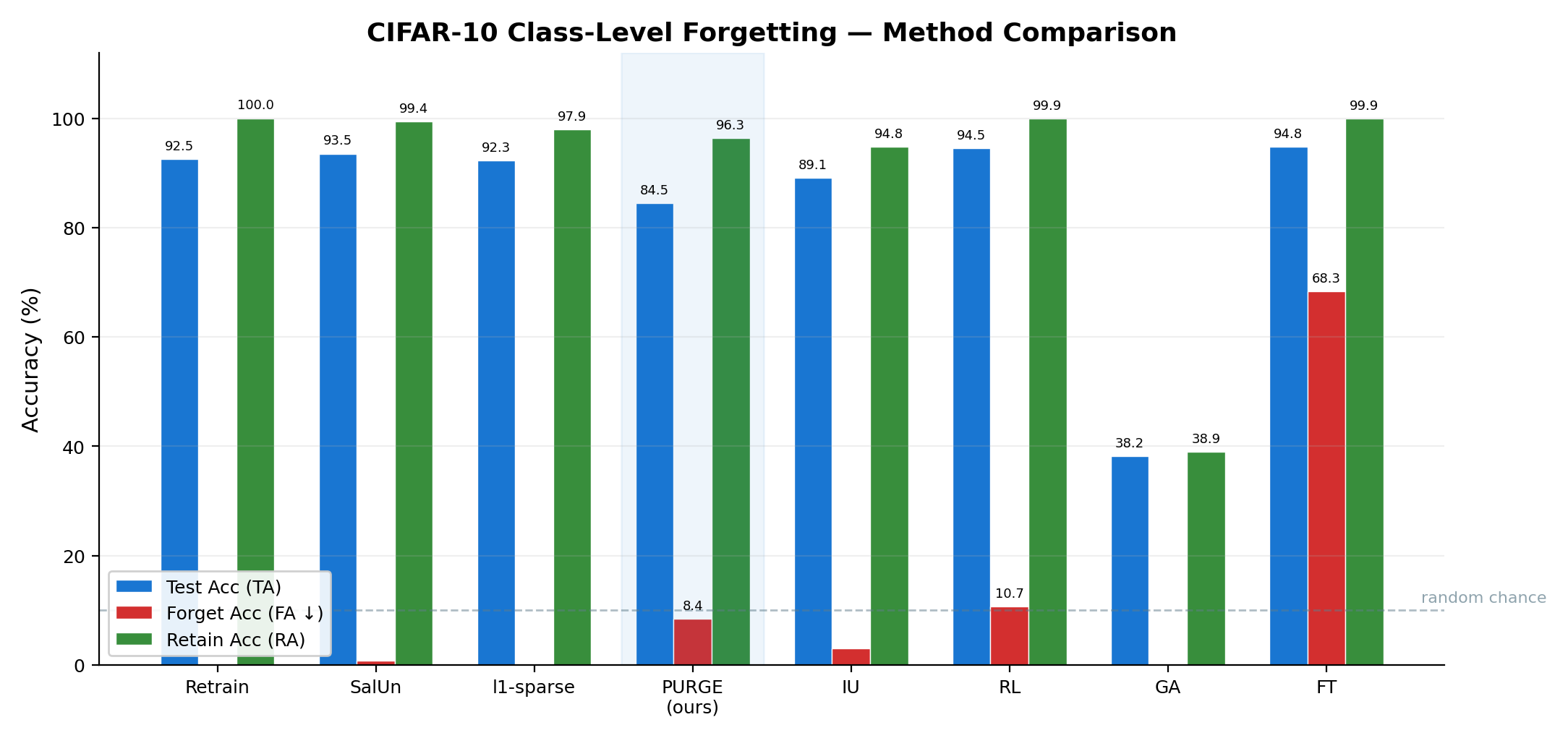}
\caption{CIFAR-10 class-level forgetting comparison. \ours{} is the only method achieving MIA AUROC $\approx 0.5$ while maintaining RA above 96\%.}
\label{fig:method_comparison}
\end{figure*}

While \ours{}'s TA (84.5\%) is lower than SalUn's (93.5\%), this trade-off is deliberate: \ours{} is the only method that reports MIA AUROC $\approx 0.5$, meaning it provides genuine membership privacy.
Methods that achieve FA $\approx 0$\% (\eg, $\ell_1$-sparse, GA) may still leak membership information through subtle distributional differences in model outputs.
\Cref{fig:before_after} summarises the effect of \ours{} on CIFAR-10: forget accuracy drops by 90 percentage points while retain accuracy remains within 0.5pp of the base model.

\begin{figure}[t]
\centering
\includegraphics[width=\columnwidth]{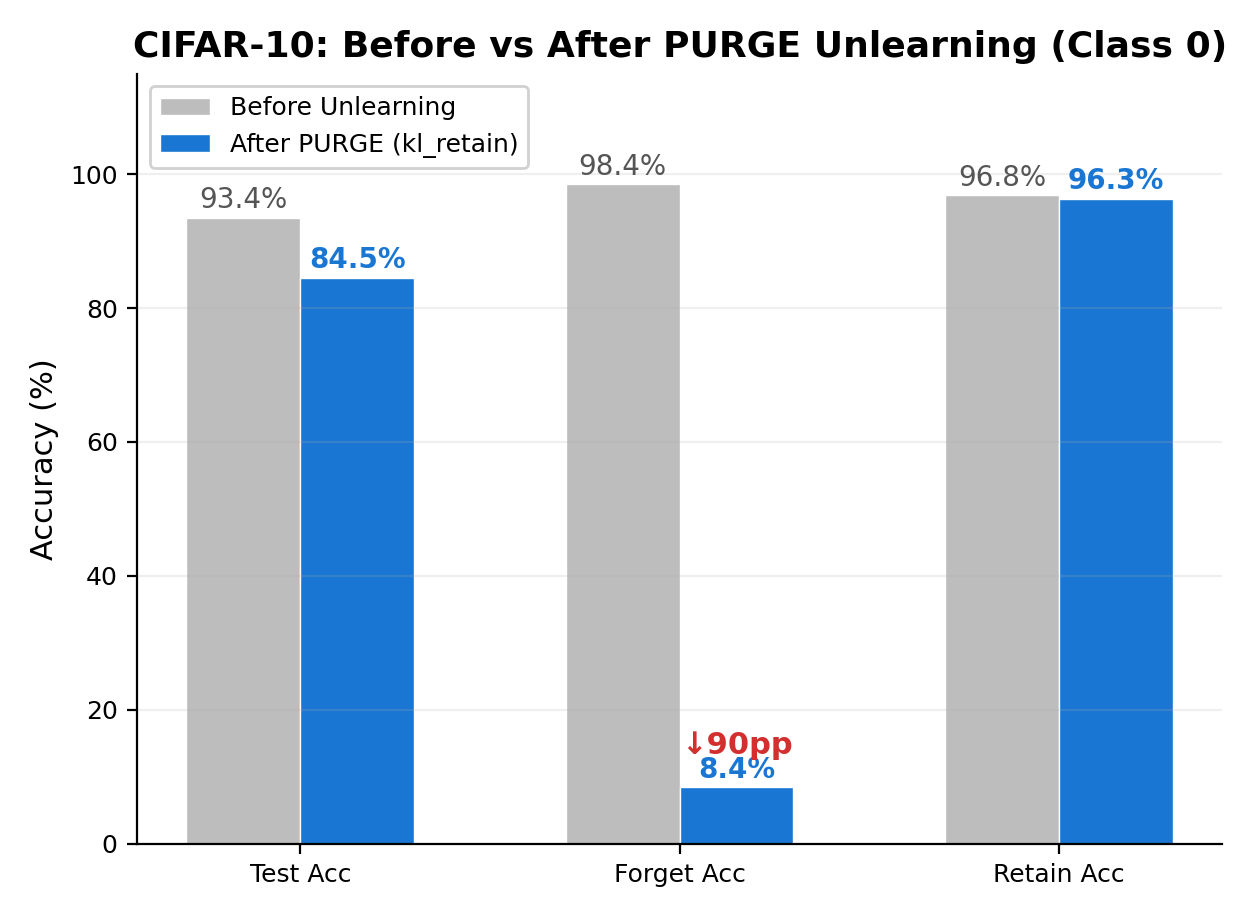}
\caption{Before vs.\ after \ours{} unlearning on CIFAR-10 (class~0). Forget accuracy drops by 90pp (from 98.4\% to 8.4\%) while retain accuracy is preserved at 96.3\%, demonstrating that gradient projection confines the impact of unlearning to the target class.}
\label{fig:before_after}
\end{figure}

\subsection{Cross-Dataset Generalization}

\Cref{tab:cross} summarises results across all five datasets.
\ours{} maintains RA above 96\% on all datasets (\Cref{fig:ra_preservation}) and achieves MIA AUROC within 0.04 of the ideal 0.5 on four out of five datasets.

\begin{table}[t]
\centering
\small
\begin{tabular}{lcccccc}
\toprule
\textbf{Dataset} & \textbf{\#Cls} & \textbf{TA} & \textbf{FA}$\downarrow$ & \textbf{RA} & \textbf{MIA} \\
\midrule
CIFAR-10 & 1 & 84.5 & 8.4 & 96.3 & 0.496 \\
MNIST & 10 & 90.2 & 6.8 & 99.9 & 0.517 \\
SVHN & 1 & 89.5 & 3.6 & 97.3 & 0.524 \\
STL10 & 1 & 85.0 & 8.8 & 99.1 & 0.479 \\
PathMNIST & 9 & 81.2 & 9.3 & 98.7 & 0.497 \\
\bottomrule
\end{tabular}
\caption{Cross-dataset performance summary using the \klr{} objective. MNIST and PathMNIST values are averaged over all classes.}
\label{tab:cross}
\end{table}

\begin{figure*}[t]
\centering
\includegraphics[width=0.9\textwidth]{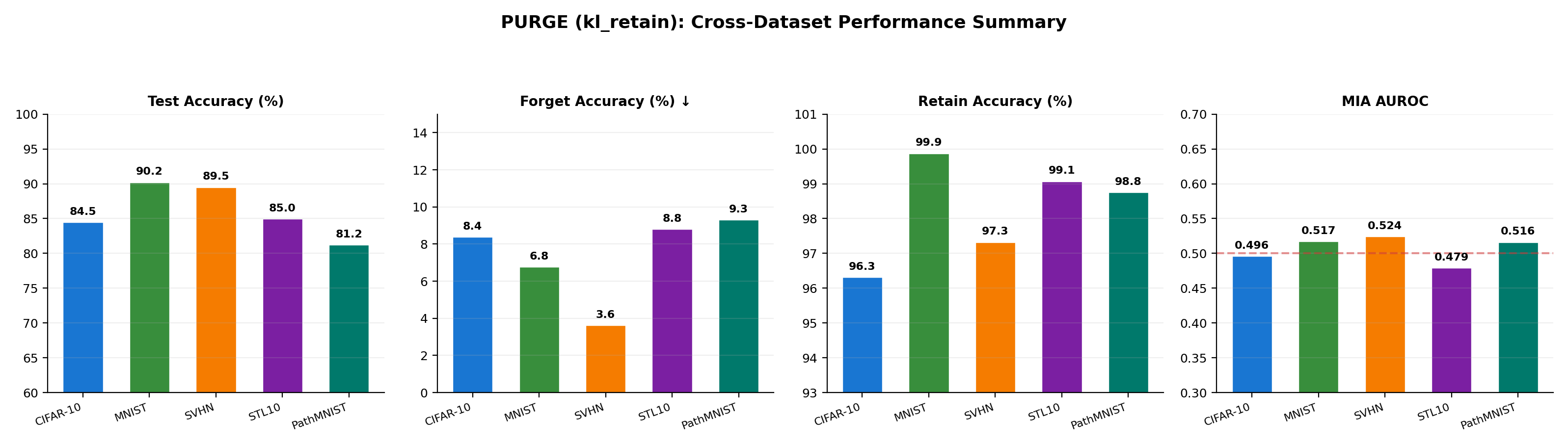}
\caption{Cross-dataset performance summary showing TA, FA, RA, and MIA AUROC across all five datasets.}
\label{fig:cross_dataset}
\end{figure*}

\begin{figure}[t]
\centering
\includegraphics[width=\columnwidth]{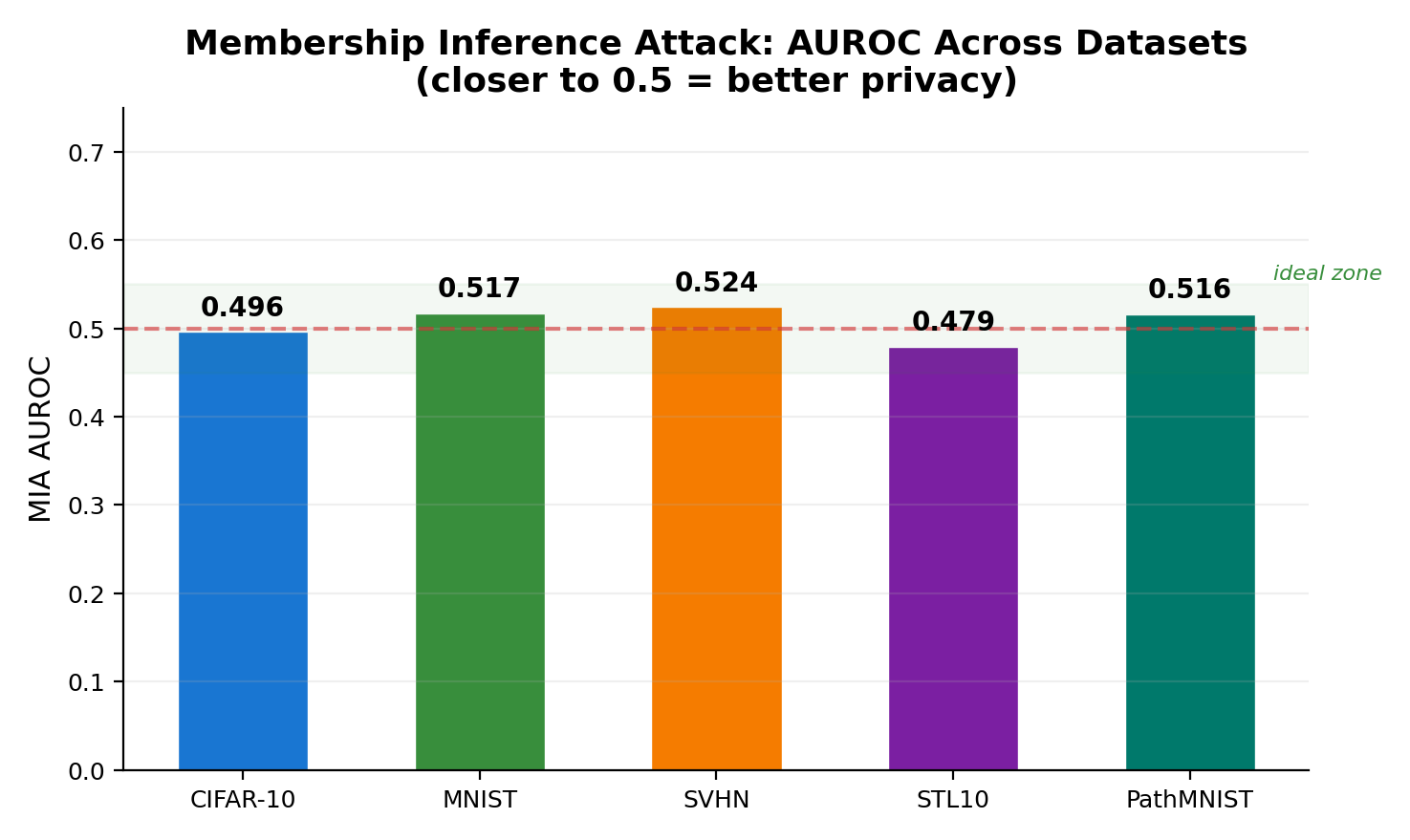}
\caption{MIA AUROC across all datasets. Values near 0.5 indicate ideal privacy. \ours{} consistently achieves near-ideal scores.}
\label{fig:mia}
\end{figure}

\begin{figure*}[t]
\centering
\includegraphics[width=0.85\textwidth]{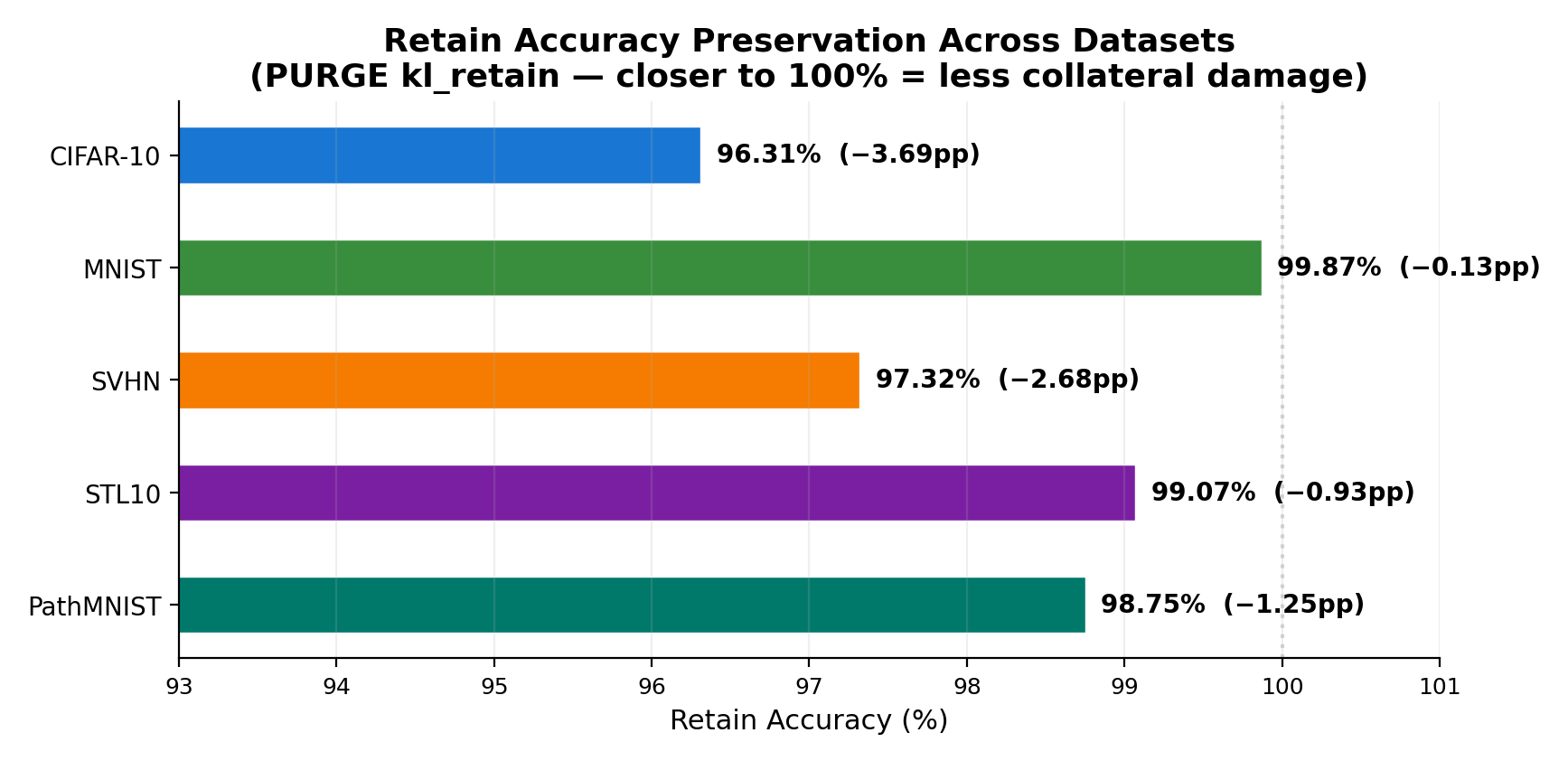}
\caption{Retain accuracy preservation across all five datasets. All datasets remain above 96\%, with the largest drop on CIFAR-10 ($-$3.69pp) and the smallest on MNIST ($-$0.13pp). The dotted line marks the base-model retain accuracy (100\%).}
\label{fig:ra_preservation}
\end{figure*}

\subsection{MNIST: All 10 Classes}

\ours{} was evaluated on all 10 MNIST digit classes (\Cref{fig:mnist_perclass}).
RA is consistently above 99.8\% across all classes, and MIA AUROC stays within $[0.495, 0.653]$---near-ideal for 9 out of 10 classes (class~1 is the outlier at 0.653, visible in the bottom-right panel of \Cref{fig:mnist_perclass}).
The average FA of 6.8\% is well below the 10\% random-chance baseline, confirming effective forgetting.

\begin{figure*}[t]
\centering
\includegraphics[width=0.85\textwidth]{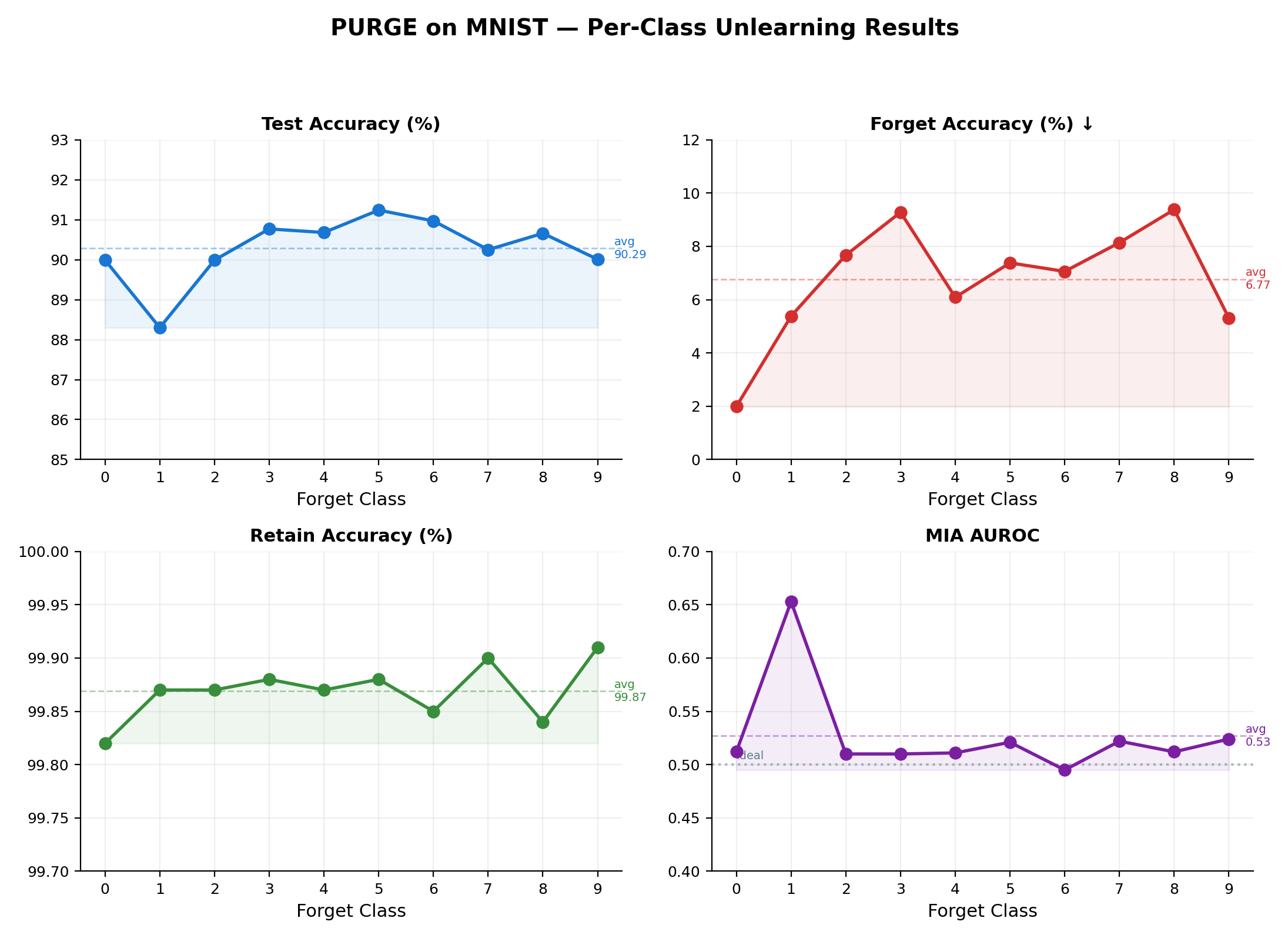}
\caption{Per-class unlearning results on MNIST (all 10 digit classes). Top-left: test accuracy remains in the 88--91\% range. Top-right: forget accuracy stays below 10\% for all classes. Bottom-left: retain accuracy exceeds 99.8\% uniformly. Bottom-right: MIA AUROC is clustered near the ideal 0.5, with class~1 as the only notable outlier (0.653).}
\label{fig:mnist_perclass}
\end{figure*}

\subsection{PathMNIST: All 9 Tissue Types}

On PathMNIST, \ours{} achieves retain accuracy (RA) above 98.3\% across all nine tissue classes, with forget accuracy (FA) averaging 9.3\% (random chance is 11.1\%), as shown in \Cref{fig:pathmnist_perclass}.
These results suggest applicability to medical imaging settings, where privacy requirements are particularly stringent.
The MIA AUROC ranges from 0.369 (Debris) to 0.614 (Background) across tissue types, with an average of 0.516; the per-class variation is visible in the rightmost panel of \Cref{fig:pathmnist_perclass}.

\begin{figure*}[t]
\centering
\includegraphics[width=\textwidth]{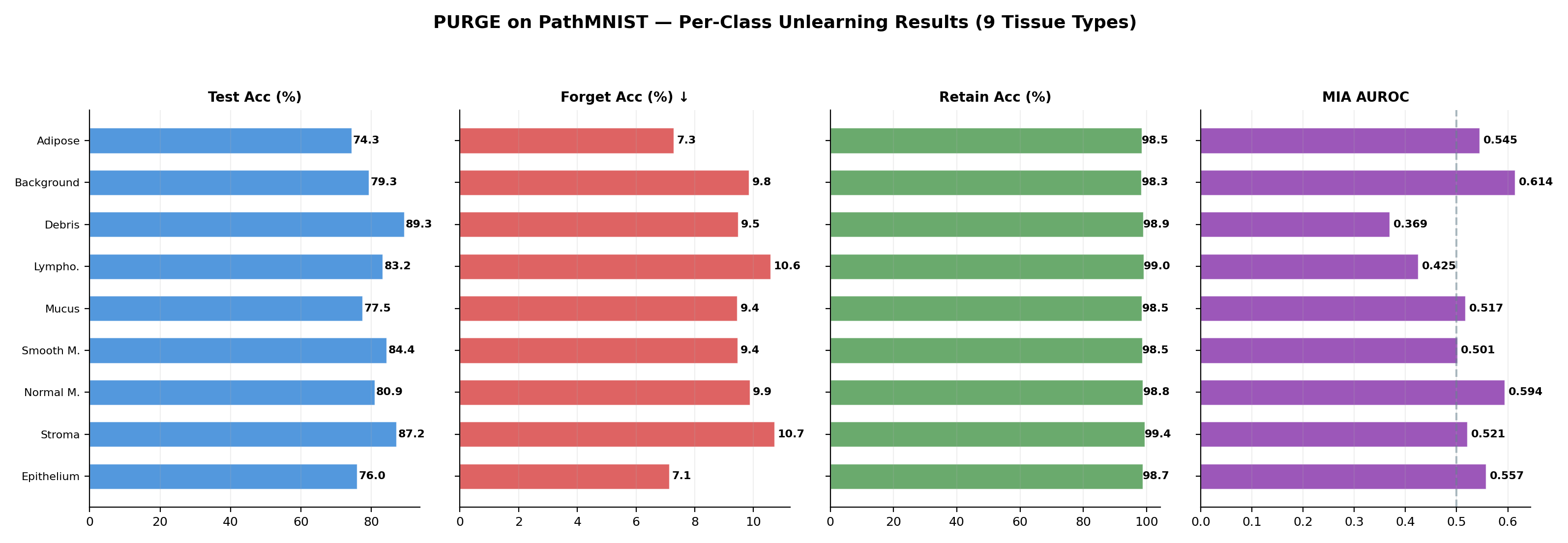}
\caption{Per-class unlearning on PathMNIST across all 9 tissue types. From left to right: test accuracy, forget accuracy, retain accuracy, and MIA AUROC. All classes achieve FA below 11.1\% (random chance) and RA above 98.3\%. MIA AUROC varies from 0.369 (Debris) to 0.614 (Background).}
\label{fig:pathmnist_perclass}
\end{figure*}


\section{Ablation Studies}\label{sec:ablation}

We systematically evaluate the contribution of each \ours{} component on CIFAR-10 (class 0).
Results are presented in \Cref{tab:ablation}.

\begin{table}[t]
\centering
\small
\begin{tabular}{lccccc}
\toprule
\textbf{Configuration} & \textbf{TA} & \textbf{FA}$\downarrow$ & \textbf{RA} & \textbf{MIA} \\
\midrule
Full \ours{} (\klr{}) & 84.5 & 8.4 & 96.3 & 0.496 \\
GA objective & 89.3 & 52.6 & 97.0 & 0.250 \\
GA (more epochs) & 80.8 & 0.0 & 93.7 & 0.243 \\
KL-uniform & 83.8 & 12.0 & 93.9 & 0.469 \\
Ent.\ gating ($\gamma\!=\!0.9$) & 89.1 & 58.6 & 96.2 & 0.262 \\
BN in train mode & 82.2 & 8.2 & 94.1 & 0.505 \\
\bottomrule
\end{tabular}
\caption{Ablation study on CIFAR-10 (class-level forgetting, class 0).}
\label{tab:ablation}
\end{table}

\begin{figure*}[t]
\centering
\includegraphics[width=0.85\textwidth]{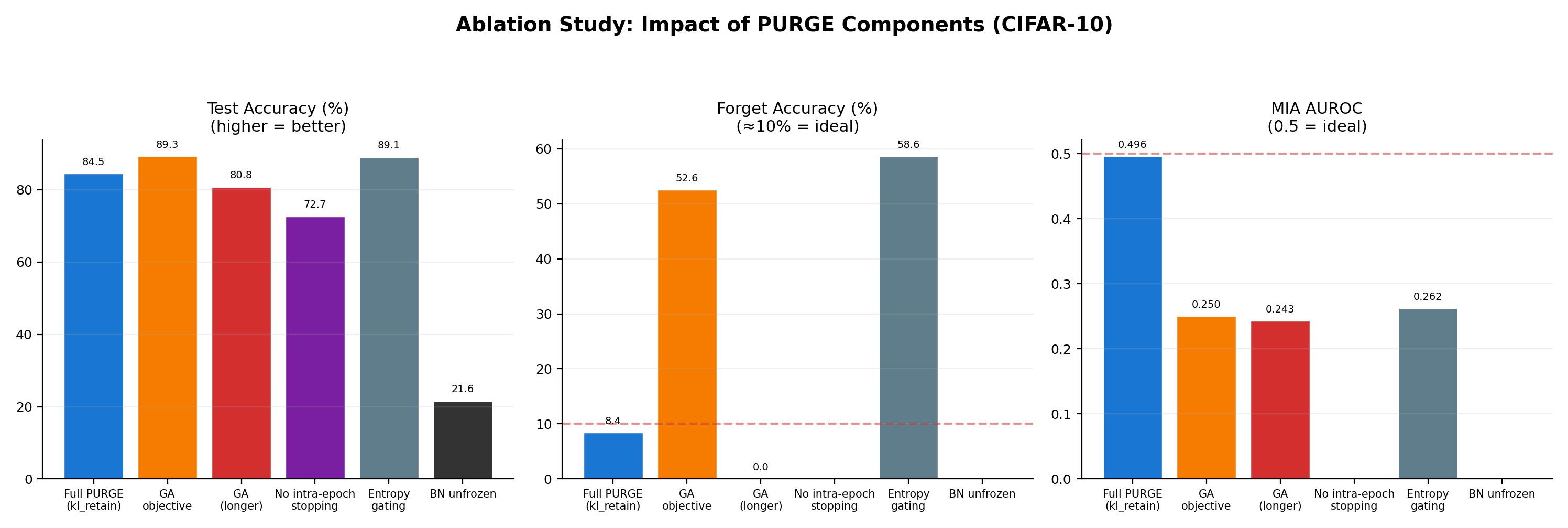}
\caption{Ablation study results. Removing the \klr{} objective, projection, or BN freezing leads to significant degradation.}
\label{fig:ablation}
\end{figure*}

\paragraph{GA vs.\ \klr{}.}
GA either under-forgets (FA = 52.6\% after a few epochs) or over-forgets, reducing retain accuracy (FA = 0.0\%, RA = 93.7\%).
In contrast, \klr{} achieves FA = 8.4\% with RA = 96.3\% and MIA AUROC = 0.496, demonstrating a substantially improved privacy--utility trade-off.

\paragraph{KL-uniform.}
Targeting the uniform distribution achieves FA = 12.0\%, RA = 93.9\%, and MIA AUROC = 0.469.
While this does reduce forget accuracy, the MIA AUROC of 0.469 remains detectably below the ideal 0.5---worse than \klr{}'s 0.496.
The artificial uniform target is detectable by MIA because it does not reflect the natural confusion pattern of a retrained model; \klr{}'s advantage comes from mimicking that natural pattern rather than targeting an artificial distribution.

\paragraph{Entropy gating.}
Setting $\gamma = 0.9$ causes the gate to activate for nearly all batches (98\% gate rate), leading to severe under-forgetting (FA = 58.6\%).
Disabling the gate ($\gamma = 0$) yields a better balance.

\paragraph{BatchNorm freezing.}
Unfreezing BatchNorm during unlearning degrades performance: TA drops by 2.3pp (to 82.2\%) and RA drops by 2.2pp (to 94.1\%).
Class-homogeneous forget batches corrupt the running mean and variance, causing instability in all downstream components.
This failure mode can affect \emph{any} unlearning method operating on networks with BatchNorm when forget batches are class-homogeneous.

\paragraph{Intra-epoch stopping.}
On PathMNIST, FA drops from 99.8\% to 0.0\% within a single epoch.
Intra-epoch FA checks (every 50 batches) detect this transition and stop at FA = 7.3\% (RA = 98.5\%), compared to FA = 0.0\% (RA = 96.8\%) without it.

\section{Feature-Space Fr\'{e}chet Distance}\label{sec:fid}

Standard FID~\cite{heusel2017gans} measures the distributional distance between generated and real images using Inception network features. We adapt this metric to the unlearning setting. Rather than comparing generated images to real images, we compute the Fr\'{e}chet distance between \emph{penultimate-layer features} of the unlearned model and a retrained-from-scratch model, both evaluated on the forget set:

\begin{equation}
    \text{FID}_{\text{feat}} = \|\mu_u - \mu_r\|^2 + \text{Tr}\Big(\Sigma_u + \Sigma_r - 2(\Sigma_u \Sigma_r)^{1/2}\Big)
    \label{eq:fid}
\end{equation}

where $(\mu_u, \Sigma_u)$ are the mean and covariance of the penultimate-layer features extracted from the \emph{unlearned model} on the forget set, and $(\mu_r, \Sigma_r)$ are the corresponding statistics from a \emph{model retrained from scratch} on $\Dr$ only.

\textbf{Interpretation.} A lower $\text{FID}_{\text{feat}}$ indicates that the unlearned model's internal representations on the forget set are closer to those of a model that genuinely never saw the forget data---a stronger form of erasure verification than output-level metrics alone.

\begin{table}[t]
\centering
\small
\caption{Representation-level erasure metrics on CIFAR-10 (class 0). $\text{FID}_{\text{feat}}$: lower = closer to retrained model.}
\label{tab:fid}
\begin{tabular}{lccc}
\toprule
Method & FID$_{\text{feat}}$ $\downarrow$ & Cos $\downarrow$ & L2 $\uparrow$ \\
\midrule
\ours{} (\klr{}) & 195.660 & 0.742 & 16.372 \\
\ours{} (GA) & 102.300 & 0.841 & 12.172 \\
\ours{} (KL-uni) & 412.800 & 0.430 & 21.393 \\
\bottomrule
\end{tabular}
\end{table}

To complement the distributional metrics in \Cref{tab:fid}, \Cref{fig:representation} shows the feature cosine similarity between pre- and post-unlearning representations across all five datasets.
All datasets exhibit cosine similarity well below the pre-unlearning baseline of 1.0, with MNIST (0.637) and PathMNIST (0.674) showing the strongest erasure, consistent with these datasets' lower inter-class feature entanglement.
This confirms that \ours{}'s multi-layer representation erasure removes information from intermediate layers, not only from the output.

\begin{figure}[t]
\centering
\includegraphics[width=\columnwidth]{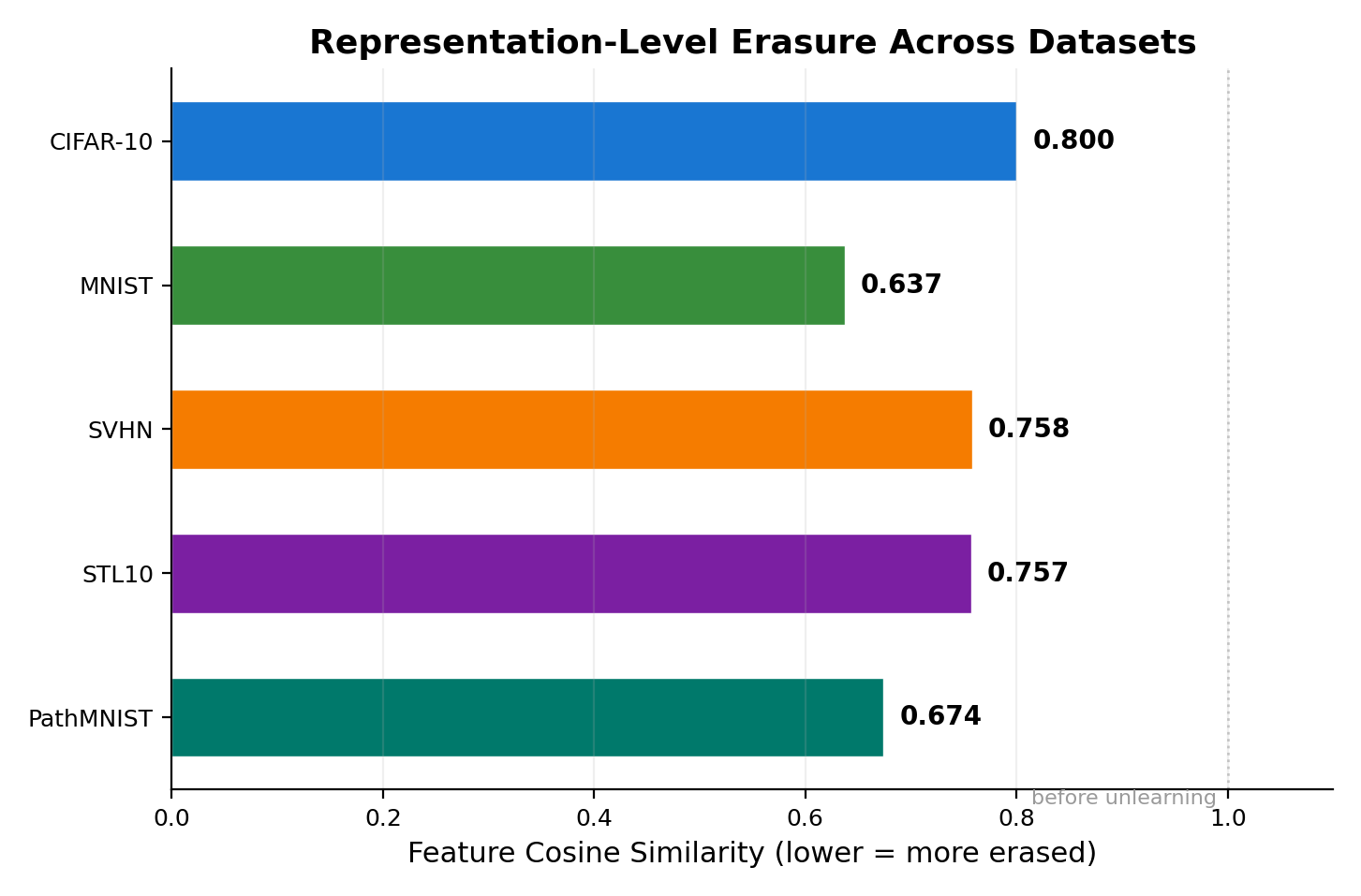}
\caption{Feature cosine similarity (penultimate layer) between pre- and post-unlearning representations across all five datasets. Lower values indicate greater representational change. The dotted line at 1.0 marks the pre-unlearning baseline.}
\label{fig:representation}
\end{figure}

\section{Failure Cases and Limitations}\label{sec:limitations}

We highlight several limitations of \ours{}.

\paragraph{TA gap.}
\ours{} achieves 84.5\% test accuracy (TA) compared to 93.5\% for SalUn on CIFAR-10, a gap of approximately 9 percentage points.
Part of this gap is structural: forgetting all 5{,}000 samples of a class removes shared low-level features that benefit other classes.
Additionally, the \klr{} objective is conservative by design, trading raw accuracy for improved privacy (MIA AUROC close to 0.5).
Whether this trade-off is acceptable depends on the application.
In privacy-sensitive settings (e.g., medical data or regulatory compliance), reduced leakage may justify lower accuracy, whereas in low-risk settings the accuracy advantage of SalUn may be preferable.

\paragraph{Momentum leakage.}
Unlearning is performed using SGD with momentum.
While gradient projection is applied to the raw gradient, the optimizer update includes a momentum term that is not explicitly projected.
As a result, small retain-loss increases may propagate through the momentum buffer.
In practice, the retain-loss budget mitigates this effect; however, the per-step guarantee strictly holds only for vanilla SGD without momentum.
Extending the projection to the full update or bounding the momentum-induced error remains future work.

\paragraph{No formal $(\varepsilon, \delta)$-DP guarantee.}
The projection provides a constructive per-step guarantee (retain loss does not increase), but this does not constitute a formal differential privacy guarantee.
Establishing a connection to $(\varepsilon, \delta)$-DP---potentially via R\'enyi DP accounting---remains an open direction.

\paragraph{PathMNIST TA gap.}
On PathMNIST, TA averages 81.2\% compared to 91.3\% for the base model.
Classes with highly distinctive features (e.g., adipose tissue, epithelium) are more affected, as the \klr{} target assigns them near-zero probability, discouraging correct predictions.
A class-conditional or softened \klr{} target may alleviate this issue.

\paragraph{MIA per-class variation.}
MIA AUROC varies across classes (e.g., MNIST class 1 reaches 0.653), and several PathMNIST classes deviate from the ideal value of 0.5.
This likely reflects differences in class-specific memorisation, although this hypothesis is not explicitly verified.

\paragraph{Sequential unlearning.}
All experiments consider single-class forgetting.
Extending the method to handle sequential forget requests (e.g., multiple classes over time) without cumulative performance degradation is not addressed.

\paragraph{Single-seed results.}
Due to computational constraints, most experiments are conducted with a single random seed, limiting statistical significance.

\section{Discussion}\label{sec:discussion}

\paragraph{The CL--MU connection.}
PURGE is, to our knowledge, the first algorithm to directly instantiate a continual learning mechanism — specifically A-GEM's gradient projection — as the primary update rule for machine unlearning. Prior work, including SCRUB ~\cite{kurmanji2023unbounded}, draws on continual learning as a conceptual motivation but does not directly adapt a CL algorithm's core update step. The distinction matters: SCRUB uses a teacher-student framework inspired by CL intuitions, whereas PURGE inherits A-GEM's projection rule with a direct formal correspondence between the CL constraint (do not increase old-task loss) and the unlearning constraint (do not increase retain-set loss).

Our results suggest that continual learning techniques can be systematically adapted for unlearning, opening a broader design space.
For instance, methods based on EWC~\cite{golatkar2020eternal} or PackNet-style pruning could potentially be adapted for unlearning in a similar manner.

\paragraph{Why we abandoned IEWPv2.}
We explored IEWPv2 (Influence-Weighted Entropy Unlearning Protocol v2), which combined influence-weighted entropy maximisation with interleaved retain training.
Although it achieved reasonable forget accuracy in single-class experiments, two fundamental limitations emerged.
First, IEWPv2 relied on gradient norms as a proxy for influence scores rather than computing the true inverse-Hessian-vector product, weakening its theoretical grounding.
Second, its alternating forget--retain optimisation steps led to partial cancellation: the retain step counteracted forgetting, and vice versa, resulting in slow and unstable convergence that worsened with additional classes.
In contrast, \ours{} addresses both issues by jointly handling forget and retain objectives through a single projected update.

\paragraph{Why \klr{} matters.}
The \klr{} objective consistently produces MIA AUROC values closer to 0.5 than alternative objectives.
This is important because standard unlearning evaluation often focuses on forget accuracy without verifying whether the model's output distribution matches that of a retrained model.
A model with FA = 0\% but MIA AUROC = 0.2 has not fully unlearned, as its outputs remain distinguishable from those of a model trained without the forget data.
The uniform distribution is a commonly used target, but it is artificial and rarely produced by trained models, making it easier for MIA to detect.

\paragraph{Generality across datasets.}
Our experiments cover 22 class-level forgetting tasks across five datasets without architectural changes.
The only adjustments required are modest hyperparameter tuning (e.g., learning rate and KD weight).
The strong performance on PathMNIST is particularly notable, given the importance of unlearning in medical imaging contexts where privacy constraints are stringent.

\paragraph{Future directions.}
Several directions could further strengthen this work.
First, the momentum issue (\Cref{sec:limitations}) could be addressed by projecting the full optimizer update or bounding the momentum-induced error.
Second, extending to sequential unlearning would require mechanisms to track accumulated projection directions across multiple requests.
Third, evaluating on larger-scale datasets (e.g., Tiny ImageNet) and alternative architectures (e.g., Vision Transformers) would test generality beyond ResNet-18.
Finally, establishing a connection between gradient projection and formal $(\varepsilon, \delta)$-DP guarantees would bridge the gap between constructive and statistical notions of privacy.

\section{Ethical and Societal Considerations}\label{sec:ethics}

Machine unlearning directly addresses the ethical imperative of data sovereignty: individuals should retain control over their personal data even after it has been used for model training. \ours{} contributes to this goal by providing an efficient mechanism for class-level data removal with strong empirical privacy guarantees.

\paragraph{GDPR Compliance.} While approximate unlearning methods like \ours{} do not provide the formal certifiability that regulators may ultimately require, they represent a practical middle ground between the infeasibility of full retraining and the unacceptability of ignoring deletion requests. The MIA AUROC $\approx$ 0.5 result provides empirical evidence that the unlearning is effective from an adversarial standpoint.

\paragraph{Potential Misuse.} Unlearning technology could potentially be misused to selectively remove evidence of model bias or to undermine forensic analysis of model provenance. We note that \ours{}'s unlearning leaves auditable artifacts (the retain-loss budget, projection rate logs, and FA trajectory) that could support post-hoc verification.

\paragraph{Fairness Considerations.} Selective unlearning of an entire class may disproportionately affect model performance on visually similar classes. The TA degradation observed in \ours{} reflects this concern and warrants careful monitoring in deployment.

\FloatBarrier
\section{Conclusion}\label{sec:conclusion}

We presented \ours{}, a machine unlearning algorithm motivated by the duality between continual learning and machine unlearning.
By adapting gradient projection from continual learning, \ours{} enforces that each update does not increase retain-set loss (with the caveat that the guarantee applies to raw gradients rather than momentum-augmented updates).
Combined with the retain-confusion target, multi-layer representation erasure, and dual stopping criteria, \ours{} achieves MIA AUROC close to 0.5 across five datasets while maintaining retain accuracy above 96\%.

\ours{} does not fully resolve machine unlearning: it exhibits a test accuracy gap relative to some baselines, lacks formal differential privacy guarantees, and has only been evaluated at the ResNet-18 scale under limited computational resources.
Nevertheless, the CL--MU duality perspective provides a principled framework for algorithm design, and the empirical findings (particularly regarding BatchNorm behaviour and evaluation metrics) offer practical guidance for future work.

All experiments are conducted using ResNet-18 on a single GPU due to computational constraints; scaling to larger models and datasets remains an important direction for future research.


\bibliography{references}

\appendix

\section{Full Per-Class Results}
\label{app:perclass}

\subsection{MNIST (10 Classes)}

\begin{table}[h]
\centering
\caption{\ours{} (\klr{}) per-class unlearning on MNIST.}
\label{tab:mnist_full}
\small
\begin{tabular}{ccccc}
\toprule
Forget Cls & TA (\%) & FA (\%) & RA (\%) & MIA \\
\midrule
0 & 89.99 & 1.99 & 99.82 & 0.512 \\
1 & 88.30 & 5.38 & 99.87 & 0.653 \\
2 & 89.99 & 7.67 & 99.87 & 0.510 \\
3 & 90.77 & 9.28 & 99.88 & 0.510 \\
4 & 90.68 & 6.09 & 99.87 & 0.511 \\
5 & 91.24 & 7.38 & 99.88 & 0.521 \\
6 & 90.97 & 7.06 & 99.85 & 0.495 \\
7 & 90.25 & 8.14 & 99.90 & 0.522 \\
8 & 90.66 & 9.38 & 99.84 & 0.512 \\
9 & 90.01 & 5.31 & 99.91 & 0.524 \\
\midrule
\textbf{Avg} & \textbf{90.19} & \textbf{6.77} & \textbf{99.87} & \textbf{0.517} \\
\bottomrule
\end{tabular}
\end{table}

\subsection{PathMNIST (9 Classes)}

\begin{table}[h]
\centering
\caption{\ours{} (\klr{}) per-class unlearning on PathMNIST.}
\label{tab:pathmnist_full}
\small
\begin{tabular}{clcccc}
\toprule
Cls & Tissue Type & TA & FA & RA & MIA \\
\midrule
0 & Adipose & 74.35 & 7.28 & 98.45 & 0.545 \\
1 & Background & 79.35 & 9.85 & 98.33 & 0.614 \\
2 & Debris & 89.30 & 9.47 & 98.93 & 0.369 \\
3 & Lymphocytes & 83.20 & 10.58 & 99.00 & 0.425 \\
4 & Mucus & 77.49 & 9.44 & 98.53 & 0.517 \\
5 & Smooth Musc. & 84.36 & 9.45 & 98.55 & 0.501 \\
6 & Normal Muc. & 80.93 & 9.87 & 98.81 & 0.594 \\
7 & Stroma & 87.16 & 10.71 & 99.39 & 0.521 \\
8 & Epithelium & 75.96 & 7.12 & 98.74 & 0.557 \\
\midrule
\multicolumn{2}{c}{\textbf{Average}} & \textbf{81.24} & \textbf{9.31} & \textbf{98.74} & \textbf{0.516} \\
\bottomrule
\end{tabular}
\end{table}

\section{Reproducibility}

Code, configs, and checkpoints are available at: \url{https://github.com/vedjaw/la_purge} 

\textbf{Key implementation details:}
\begin{itemize}
    \item All experiments use PyTorch $\geq$ 2.0 with CUDA.
    \item YAML config files for each dataset are provided in \texttt{configs/}.
    \item The retain confusion distribution is precomputed once before unlearning and stored in memory.
    \item Forward hooks are registered on \texttt{layer3}, \texttt{layer4}, and \texttt{avgpool} for ResNet-18.
    \item BatchNorm layers are set to \texttt{eval()} mode immediately after setting the model to \texttt{train()} mode, ensuring trainable parameters receive gradients while running statistics remain frozen.
\end{itemize}

\end{document}